\newcommand{\dataset}{\mathcal{D}}
\newcommand{\batch}{\mathcal{B}}
\newcommand{\thetahat}{\hat{\theta}}
\newcommand{\thetabar}{\bar{\theta}}
\newcommand{\pnml}{p_{\textrm{NML}}}
\newcommand{\phatnml}{\hat{p}_{\textrm{NML}}}
\newcommand{\Regret}{\textrm{Regret}}
\newcommand{\Regretf}{\textrm{Regret}_f}
\newcommand{\dcupxy}{{\dataset \cup (\vx, y)}}
\newcommand{\dcupxystar}{{\dataset \cup (\vx, y^*)}}
\newcommand{\fmax}{f_{\textrm{max}}}
\newcommand{\gmax}{g_{\textrm{max}}}
\newcommand{\fhattheta}{\hat{f}_\theta}
\newcommand{\out}{o}
\newcommand{\outint}{\mu}
\newcommand\defeq{\stackrel{\mathclap{\normalfont\mbox{def}}}{=}}
\newcommand{\Yquant}{\floor{\gY}}
\def\eqref#1{equation~\ref{#1}}
\def\floor#1{\lfloor #1 \rfloor}
\def\1{\bm{1}}
\def\vx{{\bm{x}}}
\DeclareMathAlphabet{\mathsfit}{\encodingdefault}{\sfdefault}{m}{sl}
\SetMathAlphabet{\mathsfit}{bold}{\encodingdefault}{\sfdefault}{bx}{n}
\def\gY{{\mathcal{Y}}}
\newcommand{\E}{\mathbb{E}}
\DeclareMathOperator*{\argmax}{arg\,max}
\DeclareMathOperator*{\argmin}{arg\,min}
\newtheorem{theorem}{Theorem}[section]
\newtheorem*{theorem*}{Theorem}
\newtheorem{proposition}{Proposition}[section]
\newtheorem{definition}{Definition}[section]
\title{Offline Model-Based Optimization via Normalized Maximum Likelihood Estimation}
\author{Justin Fu \& Sergey Levine  \\
Department of Electrical Engineering and Computer Science\\
University of California, Berkeley\\
\texttt{\{justinjfu,svlevine\}@eecs.berkeley.edu} \\
}
\begin{document}

\maketitle

\begin{abstract}
In this work we consider data-driven optimization problems where one must maximize a function given only queries at a fixed set of points. This problem setting emerges in many domains where function evaluation is a complex and expensive process, such as in the design of materials, vehicles, or neural network architectures. Because the available data typically only covers a small manifold of the possible space of inputs, a principal challenge is to be able to construct algorithms that can reason about uncertainty and out-of-distribution values, since a naive optimizer can easily exploit an estimated model to return adversarial inputs. We propose to tackle this problem by leveraging the normalized maximum-likelihood (NML) estimator, which provides a principled approach to handling uncertainty and out-of-distribution inputs. While in the standard formulation NML is intractable, we propose a tractable approximation that allows us to scale our method to high-capacity neural network models. We demonstrate that our method can effectively optimize high-dimensional design problems in a variety of disciplines such as chemistry, biology, and materials engineering.
\end{abstract}

\section{Introduction}

Many real-world optimization problems involve function evaluations that are the result of expensive or time-consuming process. Examples occur in the design of materials~\citep{mansouri2018machine}, proteins~\citep{brookes2019cbas,kumar2019min}, neural network architectures~\citep{zoph2016neural}, or vehicles~\citep{hoburg2014geometric}. Rather than settling for a slow and expensive optimization process through repeated function evaluations, one may instead adopt a data-driven approach, where a large dataset of previously collected input-output pairs is given in lieu of running expensive function queries. Not only could this approach be more economical, but in some domains, such as in the design of drugs or vehicles, function evaluations pose safety concerns and an online method may simply be impractical. We refer to this setting as the~\textit{offline model-based optimization} (MBO) problem, where a static dataset is available but function queries are not allowed. 

A straightforward method to solving offline MBO problems would be to estimate a proxy of the ground truth function $\hat{f}_\theta$ using supervised learning, and to optimize the input $\vx$ with respect to this proxy. However, this approach is brittle and prone to failure, because the model-fitting process often has little control over the values of the proxy function on inputs outside of the training set. An algorithm that directly optimizes $\hat{f}_\theta$ could easily exploit the proxy to produce adversarial inputs that nevertheless are scored highly under $\hat{f}_\theta$~\citep{kumar2019min,fannjiang2020autofocused}. 

In order to counteract the effects of model exploitation, we propose to use the normalized maximum likelihood framework (NML)~\citep{barron1998mdl}. The NML estimator produces the distribution closest to the MLE assuming an \textit{adversarial} output label, and has been shown to be effective for resisting adversarial attacks~\citep{bibas2019deep}. Moreover, NML provides a principled approach to generating uncertainty estimates which allows it to reason about out-of-distribution queries.
However, because NML is typically intractable except for a handful of special cases~\citep{roos2008bayesian}, we show in this work how we can circumvent intractability issues with NML in order to construct a reliable and robust method for MBO. Because of its general formulation, the NML distribution provides a flexible approach to constructing conservative and robust estimators using high-dimensional models such as neural networks.

The main contribution of this work is to develop an offline MBO algorithm that utilizes a novel approximation to the NML distribution to obtain an uncertainty-aware forward model for optimization, which we call NEMO (\textbf{N}ormalized maximum likelihood \textbf{E}stimation for \textbf{M}odel-based \textbf{O}ptimization). The basic premise of NEMO is to construct a conditional NML distribution that maps inputs to a distribution over outputs. While constructing the NML distribution is intractable in general, we discuss novel methods to amortize the computational cost of NML, which allows us the scale our method to practical problems with high dimensional inputs using neural networks. A separate optimization algorithm can then be used to optimize over the output to any desired confidence level. 
Theoretically, we provide insight into why NML is useful for the MBO setting by showing a regret bound for modeling the ground truth function.
Empirically, we evaluate our method on a selection of tasks from the Design Benchmark~\citep{anonymous2020design}, where we show that our method performs competitively with state-of-the-art baselines. Additionally, we provide a qualitative analysis of the uncertainty estimates produced by NEMO, showing that it provides reasonable uncertainty estimates, while commonly used methods such as ensembles can produce erroneous estimates that are both confident and wrong in low-data regimes.

\section{Related Work}

\textbf{Derivative-free optimization} methods are typically used in settings where only function evaluations are available. This includes methods such as  REINFORCE~\citep{williams1992reinforce} and reward-weighted regression~\citep{peters2007reinforcement} in reinforcement learning, the cross-entropy method~\citep{rubinstein1999cem}, latent variable models~\citep{garnelo2018conditional,kim2019attentive}, and Bayesian optimization~\citep{snoek2012bayesopt,shahriari2015taking}.
Of these approaches, Bayesian optimization is the most often used when function evaluations are expensive and limited. However, all of the aforementioned methods focus on the active or online setting, whereas in this work, we are concerned with the offline setting where additional function evaluations are not available.

\textbf{Normalized maximum likelihood} is an information-theoretic framework based on the minimum description length principle~\citep{rissanen1978mdl}. While the standard NML formulation is purely generative, the conditional or predictive NML setting can be used 
~\citet{rissanen2007cnml,fogel2018universal} for supervised learning and prediction problems.~\citet{bibas2019deep} apply this framework for prediction using deep neural networks, but require an expensive finetuning process for every input. The goal of our work is to provide a scalable and tractable method to approximate the CNML distribution, and we apply this framework to offline optimization problems.

Like CNML, \textbf{conformal prediction}~\citep{shafer2008tutorial} is concerned with predicting the value of a query point $\hat{y}_{t+1}$ given a prior dataset, and provides per-instance confidence intervals, based on how consistent the new input is with the rest of the dataset. Our work instead relies on the NML framework, where the NML regret serves a similar purpose for measuring how close a new query point is to existing, known data.

\textbf{The offline model-based optimization} problem has been applied to problems such as designing DNA~\citep{killoran2017generating}, drugs~\citep{popova2018deep}, or materials~\citep{hautier2010finding}. The estimation of distribution algorithm~\citep{bengoetxea2001eda} alternates between searching in the input space and model space using a maximum likelihood objective.
\citet{kumar2019min} propose to learn an inverse mapping from output values to input values, and optimize over the output values which produce consistent input values. \citet{brookes2019cbas} propose CbAS, which uses a trust-region to limit exploitation of the model. \citet{fannjiang2020autofocused} casts the MBO problem as a minimax game based on the \textit{oracle gap}, or the value between the ground truth function and the estimated function. In contrast to these works, we develop an approach to MBO which explicitly reasons about uncertainty. Approaches which utilize uncertainty, such as Bayesian optimization, are commonly used in online settings, and we expect these to work in offline settings as well.

There are several related areas that could arguably be viewed as special cases of MBO.
One is in contextual bandits under the batch learning from bandit feedback setting, where learning is often done on logged experience~\citep{swaminathan2015self,joachims2018deep}, or offline reinforcement learning~\citep{levine2020offline}, where model-based methods construct estimates of the MDP parameters~\citep{kidambi2020morel,yu2020mopo}. Our work focuses on a more generic function optimization setting, but could be applied in these domains as well.

\section{Preliminaries}
\label{sec:preliminaries}
We begin by reviewing the problem formulation for offline model-based optimization, as well as necessary background on the normalized maximum likelihood estimator.

\begin{figure}[t]
    \centering
    \includegraphics[width=0.85\textwidth]{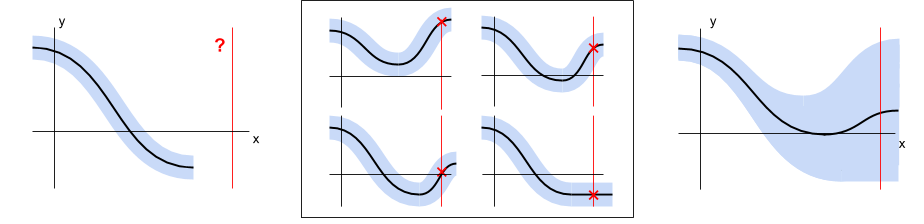}
    \caption{An illustrative example of the construction of the CNML distribution. \textbf{Left} We wish to estimate the $p(y|x)$ at some query point $x$, marked by the vertical red line. \textbf{Middle} We compute the MLE assuming we knew the true label $y$, for every possible value of $y$. \textbf{Right} Finally, we normalize the predictions across all MLE models to produce $\pnml$. The final prediction will likely exhibit large amounts of uncertainty on queries $x$ far from the dataset, because it is easier for the individual MLE estimates to overfit to these outliers.}
    \label{fig:nml_illus}
\end{figure}

\textbf{Problem statement}. We define the offline model-based optimization (MBO) problem as follows. Assume the existence of a stochastic ground truth function $f(y|\vx)$. The MBO algorithm is given a dataset $\dataset$ of inputs $\vx$ along with outputs $y$ sampled from $f(y|\vx)$. Like in standard optimization problems, the goal of MBO is to find the input value that maximizes the true function:
\begin{equation}
\label{eqn:mbo}
\vx^* = \textrm{argmax}_\vx \E_{y \sim f(y|\vx)}[y].
\end{equation}
However, in offline MBO, the algorithm is not allowed to query the true function $f(y|\vx)$, and must find the best possible point $\vx^*$ using only the guidance of a fixed dataset $\dataset = \{\vx_{1:N}, y_{1:N}\}$. One approach to solving this problem is to introduce a separate proxy function $\fhattheta(y|\vx) \approx f(y|\vx)$, which is learned from $\dataset$ as an estimate of the true function. From here, standard optimization algorithms such as gradient descent can be used to find the optimum of the proxy function, $\hat{\vx}^* = \textrm{argmax}_\vx \E_{y \sim \fhattheta(y|\vx)}[y]$. Alternatively, a trivial algorithm could be to select the highest-performing point in the dataset. While adversarial ground truth functions can easily be constructed where this is the best one can do (e.g., if $f(x) = -\infty$ on any $x \notin \dataset$), in many reasonable domains it should be possible to perform better than the best point in the dataset.

\textbf{Conditional normalized maximum likelihood}. In order to produce a conditional distribution $\pnml(y|\vx)$ we can use for estimating the ground truth function, we leverage the conditional or predictive NML (CNML) framework~\citep{rissanen2007cnml,fogel2018universal,bibas2019deep}. 
Intuitively, the CNML distribution is the distribution closest to the MLE assuming the test label $y$ is chosen adversarially. This is useful for the MBO setting since we do not know the ground truth value $y$ at points we are querying during optimization, and the CNML distribution gives us conservative estimates that help mitigate model exploitation (see Fig.~\ref{fig:nml_illus}).
Formally, the CNML estimator is the minimax solution to a notion of regret, called the \textit{individual regret} defined as $\Regret_\textrm{ind}(h, y) = \log p(y|\vx, \thetahat_\dcupxy) -  \log h(y|\vx)$, and $\pnml(y|\vx) = \argmin_h \max_{y'}
\Regret_\textrm{ind}(h, y')$~\citep{fogel2018universal}.
The notation $\dcupxy$ refers to an augmented dataset by appending a query point and label $(\vx,y)$, to a fixed offline dataset $\dataset$, and $\thetahat_\dcupxy$ denotes the MLE estimate for this augmented dataset. The query point $(\vx, y)$ serves to represent the test point we are interested in modeling. 
The solution to the minimax problem can be expressed as~\citep{fogel2018universal}:
\begin{equation}
\label{eqn:pnml}
\pnml(y|\vx) = 
\frac{p(y|\vx, \thetahat_{\dcupxy})}
{\int_{y'} p(y'|\vx, \thetahat_{{\dataset \cup (\vx, y')}}) dy' },
\end{equation}
where $\thetahat_\dcupxy = \argmax_\theta \frac{1}{N+1} \sum_{(\vx, y) \in \dcupxy}\log p(y|\vx, \theta)$ is the maximum likelihood estimate for $p$ using the dataset $\dataset$ augmented with $(\vx,y)$. 

The NML family of estimators has connections to Bayesian methods, and has shown to be asymptotically equivalent to Bayesian inference under the uninformative Jeffreys prior~\citep{rissanen1996fisher}. NML and Bayesian modeling both suffer from intractability, albeit for different reasons. Bayesian modeling is generally intractable outside of special choices of the prior and model class $\Theta$ where conjugacy can be exploited. On the other hand, NML is intractable because the denominator requires integrating and training a MLE estimator for every possible $y$.
One of the primary contributions of this paper is to discuss how to approximate this intractable computation with a tractable one that is sufficient for optimization on challenging problems, which we discuss in Section~\ref{sec:nemo}.

\section{NEMO: Normalized Maximum Likelihood Estimation for Model-Based Optimization}
\label{sec:nemo}

We now present NEMO, our proposed algorithm for high-dimensional offline MBO. NEMO is a tractable scheme for estimating and optimizing the estimated expected value of the target function under the CNML distribution. As mentioned above, the CNML estimator (Eqn.~\ref{eqn:pnml}) is difficult to compute directly, because it requires \textbf{a)} obtaining the MLE for each value of $y$, and \textbf{b)} integrating these estimates over $y$. In this section, we describe how to address these two issues, using amortization and quantization. We outline the high-level pseudocode in Algorithm~\ref{alg:nemo}, and presented a more detailed implementation in Appendix~\ref{sec:detailed_pseudocode}.

\begin{algorithm}[tb]
\caption{\label{alg:nemo}NEMO: Normalized Maximum Likelihood for Model-Based Optimization}
\begin{algorithmic}
\STATE \textbf{Input} Model class $\{f_\theta: \theta \in \Theta\}$, Dataset $\dataset = (\vx_{1:N}, y_{1:N})$, number of bins $K$, evaluation function $g(y)$, learning rates $\alpha_\theta$, $\alpha_x$.
\STATE \textbf{Initialize} $K$ models $\theta^{1:K}_0$, optimization iterate $\vx_0$
\STATE Quantize $y_{1:N}$ into $K$ bins, denoted as $\Yquant = \{\floor{y_1}, \cdots \floor{y_k}\}$.
\FOR {iteration $t$ in $1 \dots T$} 
    \FOR {$k$ in $1 \dots K$}
        \STATE construct augmented dataset: $\dataset' \gets \dataset \cup (\vx_t, \floor{y_k})$.
        \STATE update model: $\theta^{k}_{t+1} \gets \theta^{k}_t + \alpha_\theta \nabla_{\theta^k_t} \textrm{LogLikelihood}(\theta^k_t, \dataset')  $
    \ENDFOR
    \STATE estimate CNML distribution: $\phatnml(y|x_t) \propto p(y | \vx_t, \theta^y_t) / \sum_{k} p(\floor{y_k} | \vx_t, \theta^{k}_t)$
    \STATE Update $\vx$: $\vx_{t+1} \gets \vx_t + \alpha_x \nabla_x E_{y \sim \phatnml(y|\vx)}[g(y)]$
\ENDFOR
\end{algorithmic}
\end{algorithm}

\subsection{An Iterative Algorithm for Model-Based Optimization}

We first describe the overall structure of our algorithm, which addresses issue \textbf{a)}, the intractability of computing an MLE estimate for every point we wish to query. In this section we assume that the domain of $y$ is discrete, and describe in the following section how we utilize a quantization scheme to approximate a continuous $y$ with a discrete one.

Recall from Section~\ref{sec:preliminaries} that we wish to construct a proxy for the ground truth, which we will then optimize with gradient ascent. The most straightforward way to integrate NML and MBO would be to fully compute the NML distribution described by Eqn.~\ref{eqn:pnml} at each optimization step, conditioned on the current optimization iterate $\vx_t$. This would produce a conditional distribution $\pnml(y|\vx)$ over output values, and we can optimize $\vx_t$ with respect to some function of this distribution, such as the mean. While this method is tractable to implement for small problems, it will still be significantly slower than standard optimization methods, because it requires finding the MLE estimate for every $y$ value per iteration of the algorithm. This can easily become prohibitively expensive when using large neural networks on high-dimensional problems.

To remedy this problem, we propose to amortize the learning process by incrementally learning the NML distribution while optimizing the iterate $\vx_t$. In order to do this, we maintain one model per value of $y$, $\hat{\theta}_{k}$, each corresponding to one element in the normalizing constant of the NML distribution. During each step of the algorithm, we sample a batch of datapoints, and train each model by appending the current iterate $\vx_t$ as well as a label $y_{t, k}$ to the batch with a weight $w$ (which is typically set to $w=1/N$). We then perform a number of gradient step on each model, and use the resulting models to form an estimate of the NML distribution $\pnml(y_t | \vx_t)$. We then compute a score from the NML distribution, such as the mean, and perform one step of gradient ascent on $\vx_t$. 

While the incremental algorithm produces only an approximation to the true NML distribution, it brings the computational complexity of the resulting algorithm to just $O(K)$ gradient steps per iteration, rather than solving entire inner-loop optimization problems. This brings the computational cost to be comparable to other baseline methods we evaluated for MBO.

\subsection{Quantization and Architecture}
The next challenge towards developing a practical NML method is addressing issue \textbf{b)}, the intractability of integrating over a continuous $y$. We propose to tackle this issue with quantization and a specialized architecture for modeling the ground truth function. 

\textbf{Quantization}. One situation in which the denominator is tractable is when the domain of $y$ is discrete and finite. In such a scenario, we could train $K$ models, where $K$ is the size of the domain, and directly sum over the likelihood estimates to compute the normalizing factor. 

\begin{wrapfigure}{r}{0.3\textwidth} 
    \vspace{-0.1in}
    \centering
    \includegraphics[width=0.3\textwidth]{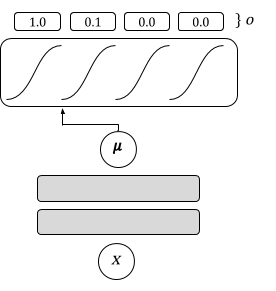}
    \caption{{\footnotesize A diagram of the discretized logistic architecture. The mean $\outint$ is denoted by an arrow, which is then passed through offset sigmoid functions to produce $\out$.}}
    \label{fig:arch}
    \vspace{-0.2in}
\end{wrapfigure}

In order to turn the NML distribution into a tractable, discrete problem, we quantize all outputs in the dataset by flooring each $y$ value to the nearest bin $\floor{y_k}$, with the size of each interval defined as $B = (y_{\textrm{max}} -y_{\textrm{min}})/K$. While quantization has potential to induce additional rounding errors to the optimization process, we find in our experiments in Section~\ref{sec:experiments} that using moderate value such as $K=20$ or $K=40$ provides both a reasonably accurate solution while not being excessively demanding on computation.

This scheme of quantization can be interpreted as a rectangular quadrature method, where the integral over $y$ is approximated as:
\[\int_y p(y | x, \thetahat_\dcupxy) dy \approx B \sum_{k=1}^K p(\floor{y_k} | x, \thetahat_{\mathcal{D} \cup (\vx, \floor{y_k})})  \]

\textbf{Discretized logistic architecture}. 
Quantization introduces unique difficulties into the optimization process for MBO. In particular, quantization results in flat regions in the optimization landscape, making using gradient-based algorithms to optimize both inputs $\vx$ and models $p(y|x, \theta)$ challenging. 
In order to alleviate these issues, we propose to model the output using a discretized logistic architecture, depicted in Fig.~\ref{fig:arch}. The discretized logistic architecture transforms and input $x$ into the mean parameter of a logistic distribution $\outint(\vx)$, and outputs one minus the CDF of a logistic distribution queried at regular intervals of $1/K$ (recall that the CDF of a logistic distribution is itself the logistic or sigmoid function). Therefore, the final output is a vector $\out$ of length K, where element $k$ is equal to $\sigma(\outint(\vx) + k / K)$. We note that similar architectures have been used elsewhere, such as for modeling the output over pixel intensity values in images~\citep{salimans2017pixelcnn}.

We train this model by first encoding a label $y$ as a vector $y_{disc}$, where $y_{disc}[k \leq \textrm{bin}(y)] = 1$ and elements  $y_{disc}[k > \textrm{bin}(y)] = 0$. $\textrm{bin}(y)$ denotes the index of the quantization bin that $y$ falls under.
The model is then trained using a standard binary cross entropy loss, applied per-element across the entire output vector. Because the output represents the one minus the CDF, the expected value of the discretized logistic architecture can be computed as $y_\textrm{mean}(\vx) = \E_{y \sim p(y|x)}[g(y)] =\sum_{k} [g(k) - g(k-1)] \out[k]$. If we assume that $g$ normalizes all output values to $[0, 1]$ in uniform bins after quantization, the mean can easily be computed as a sum over the entire output vector, $y_\textrm{mean} = \frac{1}{K} \sum_{k} \out[k]$

\textbf{Optimization} The benefit of using such an architecture is that when optimizing for $\vx$, rather than optimizing the predicted output directly, we can compute gradients with respect to the logistic parameter $\outint$. Because $\outint$ is a single scalar output of a feedforward network, it is less susceptible to flat gradients introduced by the quantization procedure. Optimizing with respect to $\outint$ is sensible as it shares the same global optimum as $y_\textrm{mean}$, and gradients with respect to $\outint$ and $y_\textrm{mean}$ share a positive angle, as shown by the following theorem:

\begin{proposition}[Discretized Logistic Gradients]
\label{thm:therm_gradient}
Let $\outint(\vx)$ denote the mean of the discretized logistic architecture for input $\vx$, and $y_\textrm{mean}(\vx)$ denote the predicted mean. Then,

\begin{enumerate}
\item If $x \in \argmax_\vx \outint(\vx)$, then $\vx \in \argmax_\vx y_\textrm{mean}(\vx)$.
\item For any $\vx$, $\left<\nabla_\vx \outint(\vx), \nabla_\vx y_\textrm{mean}(\vx) \right> \ge 0$.
\end{enumerate}
\end{proposition}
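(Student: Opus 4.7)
The plan is to observe that $y_\textrm{mean}(\vx)$ factors through $\outint(\vx)$ via a scalar monotone map, and then both claims reduce to elementary properties of that map. Concretely, define the scalar function
\[
\Phi(u) \;\defeq\; \frac{1}{K}\sum_{k=1}^{K} \sigma\!\left(u + \tfrac{k}{K}\right),
\]
so that, by the definition of the discretized logistic architecture given immediately above the proposition, $y_\textrm{mean}(\vx) = \Phi(\outint(\vx))$. The whole proposition is then a statement about the composition $\Phi \circ \outint$, and the main work is just verifying that $\Phi$ is strictly increasing.

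First I would establish monotonicity of $\Phi$. Differentiating termwise,
\[
\Phi'(u) \;=\; \frac{1}{K}\sum_{k=1}^{K} \sigma\!\left(u + \tfrac{k}{K}\right)\!\Bigl(1-\sigma\!\left(u + \tfrac{k}{K}\right)\Bigr) \;>\; 0
\]
for every $u\in\mathbb{R}$, since each summand is strictly positive. Hence $\Phi$ is strictly increasing on $\mathbb{R}$.

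For claim (1), if $\vx \in \argmax_{\vx}\outint(\vx)$, then $\outint(\vx) \ge \outint(\vx')$ for every $\vx'$, and applying the monotone $\Phi$ gives $y_\textrm{mean}(\vx) = \Phi(\outint(\vx)) \ge \Phi(\outint(\vx')) = y_\textrm{mean}(\vx')$, so $\vx \in \argmax_{\vx} y_\textrm{mean}(\vx)$. For claim (2), the chain rule gives
\[
\nabla_\vx y_\textrm{mean}(\vx) \;=\; \Phi'\bigl(\outint(\vx)\bigr)\, \nabla_\vx \outint(\vx),
\]
and taking the inner product with $\nabla_\vx \outint(\vx)$ yields
\[
\bigl\langle \nabla_\vx \outint(\vx),\, \nabla_\vx y_\textrm{mean}(\vx)\bigr\rangle \;=\; \Phi'\bigl(\outint(\vx)\bigr)\,\bigl\| \nabla_\vx \outint(\vx)\bigr\|^2 \;\ge\; 0,
\]
since $\Phi'>0$ and the squared norm is nonnegative.

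There is essentially no hard step here; the only potential subtlety is bookkeeping the exact indexing and normalization of the sigmoid outputs so that $y_\textrm{mean}$ really is $\frac{1}{K}\sum_k \out[k]$ (which is justified in the text under the assumption that $g$ rescales outputs into $[0,1]$ on uniform bins). Once that identification is made, both conclusions are immediate consequences of the positivity of the sigmoid derivative.
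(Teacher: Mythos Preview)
Your proof is correct and follows essentially the same approach as the paper: both argue that $y_\textrm{mean}$ is a monotone function of the scalar $\outint$ (as a sum of sigmoids), then use this for part (1) and apply the chain rule $\nabla_\vx y_\textrm{mean} = \frac{dy_\textrm{mean}}{d\outint}\,\nabla_\vx \outint$ together with nonnegativity of the derivative and the squared norm for part (2). Your version is simply more explicit in defining $\Phi$ and computing $\Phi'$, but the underlying idea is identical.
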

\begin{proof}
See Appendix.~\ref{sec:proofs}.
\end{proof}

\subsection{Theoretical Results}

We now highlight some theoretical motivation for using CNML in the MBO setting, and show that estimating the true function with the CNML distribution is close to an expert even if the test label is chosen adversarially, which makes it difficult for an optimizer to exploit the model.
As discussed earlier, the CNML distribution minimizes a notion of regret based on the log-loss with respect to an adversarial test distribution. This construction leads the CNML distribution to be very conservative for out-of-distribution inputs.
However, the notion of regret in conventional CNML does not easily translate into a statement about the outputs of the function we are optimizing. In order to reconcile these differences, we introduce a new notion of regret, the \emph{functional regret}, which measures the difference between the output estimated under some model against an expert within some function class $\Theta$. 

\begin{definition}[Functional Regret]
Let $q(y|\vx)$ be an estimated conditional distribution, $\vx$ represent a query input, and $y^*$ represent a label for $\vx$. We define the functional regret of a distribution $q$ as:
\[\Regretf(q, \dataset, \vx, y^*) = | \E_{y \sim q(y | \vx)}[g(y)] - \E_{y \sim p(y | \vx, \thetahat)}[g(y)] |\]
Where $\thetahat$ is MLE estimator for the augmented dataset $\dataset \cup (\vx, y^*)$ formed by appending $(\vx, y^*)$ to $\dataset$.
\end{definition}

A straightforward choice for the evaluation function $g$ is the identity function $g(y) = y$, in which case the functional regret controls the difference in expected values between $q$ and the MLE estimate $p(y|x, \thetahat)$. We now show that the functional regret is bounded for the CNML distribution:

\begin{theorem}
\label{thm:nml_fregret}
Let $\pnml$ be the conditional NML distribution defined in Eqn.~\ref{eqn:pnml}. Then, 
\[\forall_{x}\ \max_{y^*} \Regretf(\pnml, \dataset, x, y^*) \le 2 \gmax \sqrt{ \Gamma(\dataset, x) / 2}.\]
$\Gamma(\dataset, x) = \log\{\sum_y p(y | x, \thetahat_\dcupxy)\}$ is the minimax individual regret, and \mbox{$\gmax = \max_{y \in \gY} g(y)$}.
\end{theorem}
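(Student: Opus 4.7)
The plan is to bound the functional regret by the total variation distance between $\pnml(\cdot|\vx)$ and $q(y) := p(y|\vx, \thetahat_{\dcupxystar})$, then apply Pinsker's inequality to turn that into a KL bound, and finally show that the KL is controlled by $\Gamma(\dataset, \vx)$.

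First I would write
\[ \Regretf(\pnml, \dataset, \vx, y^*) = \Bigl|\sum_y (\pnml(y|\vx) - q(y))\, g(y)\Bigr| \le \gmax\, \|\pnml(\cdot|\vx) - q\|_1 \]
using $0 \le g(y) \le \gmax$, and then invoke Pinsker's inequality $\|\pnml - q\|_1 \le \sqrt{2\, \KL(q \| \pnml)}$. Combining these two steps, the entire bound reduces to proving $\KL(q \| \pnml(\cdot|\vx)) \le \Gamma(\dataset, \vx)$, which then yields $\Regretf \le \gmax\sqrt{2\Gamma} = 2\gmax\sqrt{\Gamma/2}$, matching the stated bound.

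For the KL bound, I would substitute the explicit NML form $\pnml(y|\vx) = p(y|\vx, \thetahat_{\dcupxy})\, e^{-\Gamma}$ to get
\[ \KL(q \| \pnml(\cdot|\vx)) = \Gamma + \sum_y q(y) \log \frac{q(y)}{p(y|\vx, \thetahat_{\dcupxy})}. \]
It then suffices to show the residual sum is non-positive. For each label $y$, the MLE optimality of $\thetahat_{\dcupxy}$ on the augmented dataset $\dcupxy$ gives $\log p(y|\vx, \thetahat_{\dcupxy}) + \sum_i \log p(y_i|\vx_i, \thetahat_{\dcupxy}) \ge \log q(y) + \sum_i \log p(y_i|\vx_i, \thetahat_{\dcupxystar})$. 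I would then average this inequality over $y \sim q$ and combine it with the reverse MLE inequality for $\thetahat_{\dcupxystar}$, so that the dataset log-likelihood terms cancel.

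The hard part will be precisely this cancellation: after averaging, one gets $\sum_y q(y) \log (p(y|\vx,\thetahat_{\dcupxy})/q(y)) \ge \sum_i \log p(y_i|\vx_i, \thetahat_{\dcupxystar}) - \E_{y\sim q}\bigl[\sum_i \log p(y_i|\vx_i, \thetahat_{\dcupxy})\bigr]$, and showing that the right-hand side is non-negative requires a careful second application of MLE optimality (for $\thetahat_{\dcupxystar}$) together with an averaging argument to cancel the cross-terms rather than let them accumulate. Once this residual non-negativity is established, substituting back into the KL formula and chaining through Pinsker's inequality and the $\gmax$-bounded sum from the first step completes the proof.
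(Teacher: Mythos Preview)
Your overall strategy---bound the difference of expectations by $\gmax$ times the $\ell_1$ distance, apply Pinsker, then show $\KL(q\,\|\,\pnml)\le\Gamma$---is exactly the paper's. Where you diverge is in how that last inequality is obtained. The paper never routes through the dataset log-likelihood terms. It simply replaces $\log p(y\mid\vx,\thetahat_{\dcupxystar})$ by $\log p(y\mid\vx,\thetahat_{\dcupxy})$ termwise inside the expectation and then bounds by $\max_{q'}\E_{y\sim q'}\bigl[\log p(y\mid\vx,\thetahat_{\dcupxy})-\log\pnml(y\mid\vx)\bigr]$; since $\log p(y\mid\vx,\thetahat_{\dcupxy})-\log\pnml(y\mid\vx)=\Gamma$ identically in $y$, this equals $\Gamma$. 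In your notation that first replacement is just the pointwise comparison $q(y)\le p(y\mid\vx,\thetahat_{\dcupxy})$, which already makes every summand of your ``residual'' non-positive---no cancellation between dataset terms is required.

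Your proposed route through two MLE inequalities plus averaging does not close. After averaging the optimality of $\thetahat_{\dcupxy}$ over $y\sim q$ and invoking the optimality of $\thetahat_{\dcupxystar}$ with competitor $\thetahat_{\dcupxy}$, the leftover cross-term you must control is $\E_{y\sim q}\bigl[\log p(y^*\mid\vx,\thetahat_{\dcupxy})\bigr]\ge\log q(y^*)$, and this can fail outright. For a concrete failure, take Bernoulli outputs with $\dataset=\{(\vx,0),(\vx,0)\}$ and $y^*=1$: then $q(1)=1/3$, but $\thetahat_{\dataset\cup(\vx,0)}$ puts $p(1\mid\vx)=0$, and since $q(0)=2/3>0$ the left side is $-\infty$. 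So the ``careful second application of MLE optimality together with an averaging argument'' you sketch cannot cancel the dataset terms in general. If you want to complete the argument, you should argue the termwise comparison $q(y)\le p(y\mid\vx,\thetahat_{\dcupxy})$ (or its expectation version) directly, as the paper does, rather than trying to absorb the dataset log-likelihoods.
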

\begin{proof}
See Appendix~\ref{sec:proofs}.
\end{proof}

This statement states that, for any test input $\vx$, the CNML estimator is close to the best possible expert if the test label $y$ is chosen adversarially. Importantly, the expert is allowed to see the label of the test point, but the CNML estimator is not, which means that if the true function lies within the model class, this statement effectively controls the discrepancy in performance between the true function and $\pnml$. The amount of slack is controlled by the minimax individual regret $\Gamma$~\citep{fogel2018universal}, which can be interpreted as a measure of uncertainty in the model. For large model classes $\Theta$ and data points $\vx$ far away from the data, the individual regret is naturally larger as the NML estimator becomes more uncertain, but for data points $\vx$ close to $\Theta$ the regret becomes very small. This behavior can be easily seen in Fig.~\ref{fig:uncertainties}, where the CNML distribution is very focused in regions close to the data but outputs large uncertainty estimates in out-of-distribution regions.

\section{Experiments}
\label{sec:experiments}

In our experimental evaluation, we aim to \textbf{1)} evaluate how well the proposed quantized NML estimator estimates uncertainty in an offline setting, and \textbf{2)} compare the performance of NEMO to a number of recently proposed offline MBO algorithms on high-dimensional offline MBO benchmark problems. Our code is available at \url{https://sites.google.com/view/nemo-anonymous}

\subsection{Modeling with Quantized NML}
We begin with an illustrative example of modeling a function with quantized NML. We compared a learned a quantized NML distribution with a bootstrapped ensemble method~\citep{breiman1996bagging} on a simple 1-dimensional problem, shown in Fig.~\ref{fig:uncertainties}. The ensemble method is implemented by training 32 neural networks using the same model class as NML, but with resampled datasets and randomized initializations. Both methods are trained on a discretized output, using a softmax cross-entropy loss function. We see that in areas within the support of the data, the NML distribution is both confident and relatively accurate. However, in regions outside of the support, the quantized NML outputs a highly uncertain estimate. In contrast, the ensemble method, even with bootstrapping and random initializations, tends to produce an ensemble of models that all output similar values. Therefore, in regimes outside of the data, the ensemble still outputs highly confident estimates, even though they may be wrong.

\subsection{High-Dimensional Model-Based Optimization}
We evaluated NEMO on a set of high-dimensional MBO problems. The details for the tasks, baselines, and experimental setup are as follows, and hyperparameter choices with additional implementation details can be found in Appendix~\ref{sec:experiment_details}.

\begin{figure}[t]
    \centering
    \includegraphics[width=0.99\textwidth]{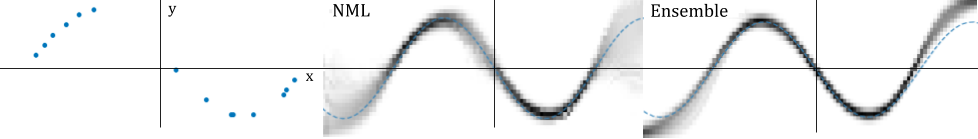}
    \caption{A comparison of uncertainty estimates between quantized NML and bootstrapped ensembles. \textbf{Left}: A small training dataset collected using the ground truth function $\sin(x)$ and quantized into 32 bins. Each dot represents a data point. \textbf{Middle}: Predictions from quantized CNML, where darker regions indicate outputs with high estimated probability. The ground truth is marked by a dotted blue line. Note that in regions where there is little data, the NML distribution tends to correctly output a very diffuse distribution with uncertain outputs. \textbf{Right}: Predictions from a bootstrap ensemble of 32 models.    In out-of-support regions far from the data, the bootstrap ensemble tends to underestimate uncertainty and produce overconfident predictions.}
    \label{fig:uncertainties}
    \vspace{-0.1in}
\end{figure}

\subsubsection{Tasks}
We evaluated on 6 tasks from the Design-bench~\citep{anonymous2020design}, modeled after real-world design problems for problems in materials engineering~\citep{hamidieh2018superconductor}, biology~\citep{sarkisyan2016GFP}, and chemistry~\citep{Gaulton2012ChEMBL}, and simulated robotics. Because we do not have access to a real physical process for evaluating the material and molecule design tasks, Design-bench follows experimental protocol used in prior work~\citep{brookes2019cbas,fannjiang2020autofocused} which obtains a ground truth evaluation function by training a separate regressor model to evaluate the performance of designs. For the robotics tasks, designs are evaluated using the MuJoCo physics simulator~\citep{todorov2012mujoco}.

\textbf{Superconductor}. The Superconductor task involves designing a superconducting material that has a high critical temperature. The input is space is an 81-dimensional vector, representing properties such as atomic radius
and valence of elements which make up the material. This dataset contains a total of 21,263 superconductoring materials proposed by~\citet{hamidieh2018superconductor}.

\textbf{GFP}. The goal of the green fluorescent protein (GFP) task is to design a protein with high fluorescence, based on work proposed by~\citet{sarkisyan2016GFP}. This task requires optimizing a 238-dimensional sequence of discrete variables, with each dimension representing one amino acid in the protein and taking on one of 20 values. We parameterize the input space as logits in order to make this discrete problem amenable to continuous optimization. 
In total, the dataset consists of 5000 such proteins annotated with fluorescence values.

\textbf{MoleculeActivity}. The MoleculeActivity task involves designing the substructure of a molecule that exhibits high activity when tested against a target assay~\citep{Gaulton2012ChEMBL}. The input space is represented by 1024 binary variables parameterized by logits, which corresponds to the Morgan radius 2 substructure fingerprints. This dataset contains a total of 4216 data points.

The final 3 tasks, \textbf{HopperController}, \textbf{AntMorphology}, and \textbf{DKittyMorphology}, involve designing robotic agents. HopperController involves learning the parameters of a 5126-parameter neural network controller for the Hopper-v2 task from OpenAI Gym~\citep{brockman2016openai}. The Ant and DKitty morphology tasks involve optimizing robot parameters such as size, orientation, and joint positions. AntMorphology has 60 parameters, and DKittyMorphology has 56 parameters.

\subsubsection{Baselines}

In addition to NEMO, we evaluate several baseline methods. A logical alternative to NEMO is a forward ensemble method, since both NEMO and ensemble methods maintain a list of multiple models in order to approximate a distribution over the function value, and ensembles are often used to obtain uncertainty-aware models. We implement an ensemble baseline by training $K$ networks on the task dataset with random initializations and bootstrapping, and then optimizing the mean value of the ensemble with gradient ascent. In our results in Table~\ref{tbl:results_max}, we label the ensemble as ``Ensemble'' and a single forward model as ``Forward''. Additionally, we implement a Bayesian optimization baseline wth Gaussian processes (GP-BO) for the Superconductor, GFP, and MoleculeActivity tasks, where we fit the parameters of a kernel and then optimize the expected improvement according to the posterior. We use an RBF kernel for the Superconductor task, and an inner product kernel for the GFP and MoleculeActivity tasks since they have large, discrete input spaces. Note that the GP baseline has no variance between runs since the resulting method is completely deterministic.

We evaluate 3 state-of-the-art methods for offline MBO: model inversion networks (MINs)~\citep{kumar2019min}, conditioning by adaptive sampling (CbAS)~\citep{brookes2019cbas}, and autofocused oracles~\citep{fannjiang2020autofocused}. MINs train an inverse mapping from outputs $y$ to inputs $\vx$, and generate candidate inputs by searching over high values of $y$ and evaluating these on the inverse model. CbAS uses a generative model of $p(x)$ as a trust region to prevent model exploitation, and autofocused oracles expands upon CbAS by iteratively updating the learned proxy function and iterates within a minimax game based on a quantity known as the oracle gap. 

\subsection{Results and Discussion}

Our results are shown in Table~\ref{tbl:results_max}. We follow an evaluation protocol used in prior work for design problems~\citep{brookes2019cbas,fannjiang2020autofocused}, where the algorithm proposes a set of candidate designs, and the 100th and 50th percentile of scores are reported. This mimics a real-world scenario in which a batch of designs can be synthesized in parallel, and the highest performing designs are selected for use.

For each experiment, we produced a batch of 128 candidate designs. MINs, CbAS, and autofocused oracles all learn a generative model to produce candidate designs, so we sampled this batch from the corresponding model. Ensemble methods and NEMO do not maintain generative models, so we instead optimized a batch of 128 particles. We report results averaged of 16 random seeds.

NEMO outperforms all methods on the Superconductor task by a very large margin, under both the 100th and 50th percentile metrics, and in the HopperController task under the 100th percentile metric. For the remaining tasks (GFP, MoleculeActivity, AntMorphology, and HopperMorphology), NEMO also produces competitive results in line with the best performing algorithm for each task. These results are promising in that NEMO performs consistently well across all 6 domains evaluated, and indicates a significant number of designs found in the GFP and Superconductor task were \textit{better} than the best performing design in the dataset. In Appendix~\ref{sec:learning_curves}, we present learning curves for NEMO, as well as an ablation study demonstrating the the beneficial effect of NML compared to direct optimization on a proxy function. Note that unlike the prior methods (MINs, CbAS, Autofocused), NEMO \emph{does not} require training a generative model on the data, only a collection of forward models.

\begin{table}[tb]
\centering

\begin{footnotesize}
\begin{tabular}{ c||c|c|c } 
$100^\textrm{th}$ Perc. & Superconductor & GFP & MoleculeActivity  \\ 
\hline 
NEMO \textbf{(ours)}& \textbf{127.0 $\pm$ 7.292} & 3.359 $\pm$ 0.036  & \textbf{6.682 $\pm$ 0.209}  \\
Ensemble & 88.01 $\pm$ 10.43 & 2.924 $\pm$ 0.039 & 6.525 $\pm$ 0.1159  \\
Forward & 89.64 $\pm$ 9.201 & 2.894 $\pm$ 0.001 & 6.636 $\pm$ 0.066   \\
MINs & 80.23 $\pm$ 10.67 & 3.315 $\pm$ 0.033 & 6.508 $\pm$ 0.236  \\
CbAS & 72.17 $\pm$ 8.652  & \textbf{3.408 $\pm$ 0.029} & 6.301 $\pm$ 0.131   \\
Autofoc. &  77.07 $\pm$ 11.11  & \textbf{3.365 $\pm$ 0.023} & 6.345 $\pm$ 0.141  \\
GP-BO & 89.72 $\pm$ 0.000 & 2.894 $\pm$ 0.000 & \textbf{6.745 $\pm$ 0.000} \\
\hline
Dataset Max & 73.90 & 3.152 & 6.558   \\
\hline 
\hline 
& HopperController & AntMorphology & DKittyMorphology \\
\hline 
NEMO \textbf{(ours)}& \textbf{2130.1 $\pm$ 506.9} & \textbf{393.7 $\pm$ 6.135} & \textbf{431.6 $\pm$ 47.79}  \\
Ensemble & 1877.0 $\pm$ 704.2 & - & -   \\
Forward & 1050.8 $\pm$ 284.5 & \textbf{399.9 $\pm$ 4.941} & 390.7 $\pm$ 49.24   \\
MINs & 746.1 $\pm$ 636.8 & 388.5 $\pm$ 9.085 & 352.9 $\pm$ 38.65  \\
CbAS & 547.1 $\pm$ 423.9 & \textbf{393.0 $\pm$ 3.750} & 369.1 $\pm$ 60.65   \\
Autofoc. & 443.8 $\pm$ 142.9 & 386.9 $\pm$ 10.58 & 376.3 $\pm$ 47.47   \\
\hline
Dataset Max & 1361.6 & 108.5 & 215.9  \\
\end{tabular}
\end{footnotesize}
\vspace{-5pt}
\caption{\label{tbl:results_max} $100^\textrm{th}$ percentile ground truth scores and standard deviations over a batch of 128 designs for each task, averaged across 16 trials.}
\vspace{5pt}
\begin{footnotesize}
\begin{tabular}{ c||c|c|c } 
$50^\textrm{th}$ Perc. & Superconductor & GFP & MoleculeActivity  \\ 
\hline 
NEMO \textbf{(ours)}& 66.41 $\pm$ 4.618 & \textbf{3.219 $\pm$ 0.039} & 5.814 $\pm$ 0.092  \\
Ensemble & 48.72 $\pm$ 2.637 & 2.910 $\pm$ 0.020 & \textbf{6.412 $\pm$ 0.123} \\
Forward & 54.06 $\pm$ 5.060 & 2.894 $\pm$ 0.000 & \textbf{6.401 $\pm$ 0.186} \\
MINs & 37.32 $\pm$ 10.50 & 3.135 $\pm$ 0.019 & 5.806 $\pm$ 0.078 \\
CbAS & 32.21 $\pm$ 7.255 & \textbf{3.269 $\pm$ 0.018} & 5.742 $\pm$ 0.123 \\
Autofoc. & 31.57 $\pm$ 7.457 & \textbf{3.216 $\pm$ 0.029} & 5.759 $\pm$ 0.158 \\
GP-BO & \textbf{72.42 $\pm$ 0.000} & 2.894 $\pm$ 0.000 & 6.373 $\pm$ 0.000  \\
\hline 
\hline 
& HopperController & AntMorphology & DKittyMorphology \\
\hline 
NEMO \textbf{(ours)}& 390.2 $\pm$ 43.37 & \textbf{326.9 $\pm$ 5.229} & 180.8 $\pm$ 34.94 \\
Ensemble & 362.5 $\pm$ 80.09 & - & - \\
Forward & 185.0 $\pm$ 72.88 & 318.0 $\pm$ 12.05 & \textbf{255.3 $\pm$ 6.379} \\
MINs & \textbf{520.4 $\pm$ 301.5} & 184.8 $\pm$ 29.52 & 211.6 $\pm$ 13.67 \\
CbAS & 132.5 $\pm$ 23.88 & 267.3 $\pm$ 16.55 & 203.2 $\pm$ 3.580 \\
Autofoc. & 116.4 $\pm$ 18.66 & 176.7 $\pm$ 59.94 & 199.3 $\pm$ 8.909 \\
\end{tabular}
\end{footnotesize}
\vspace{-5pt}
\caption{\label{tbl:results_med} $50^\textrm{th}$ percentile ground truth scores and standard deviations over a batch of 128 designs for each task, averaged across 16 trials.}
\end{table}

\section{Conclusion}
\label{sec:conclusion}
We have presented NEMO (Normalized Maximum Likelihood Estimation for Model-Based Optimization), an algorithm that mitigates model exploitation on MBO problems by constructing a conservative model of the true function. NEMO generates a tractable approximation to the NML distribution to provide conservative objective value estimates for out-of-distribution inputs. Our theoretical analysis also suggests that this approach is an effective way to estimate unknown objective functions outside the training distribution. We evaluated NEMO on a number of design problems in materials science, robotics, biology, and chemistry, where we show that it attains very large improvements on two tasks, while performing competitively with respect to prior methods on the other four.

While we studied offline MBO in this paper, we believe that NML is a promising framework for building uncertainty-aware models which could have numerous applications in a variety of other problem settings, such as in off-policy evaluation, batch learning from bandit feedback, or offline reinforcement learning.

\bibliography{iclr2021_conference}
\bibliographystyle{iclr2021_conference}

\newpage
\appendix
\section{Appendix}

\subsection{Proofs}

\subsubsection{Proof of Proposition~\ref{thm:therm_gradient}}
This statement directly from monotonicity. Because $y_\textrm{mean}$ is a sum of monotonic functions in $\outint$, $y_\textrm{mean}$ must also be a monotonic function of $\outint$. This implies that the global maxima are the same, and that gradients must point in the same direction since $\left<\nabla_\vx \outint, \nabla_\vx y_\textrm{mean} \right> = \left<\nabla_\vx \outint, \frac{dy_\textrm{mean}}{d\outint}\nabla_\vx \outint \right> =  \frac{dy_\textrm{mean}}{d\outint} ||\nabla_\vx \outint||^2_2 \ge 0$.

\subsubsection{Proof of Theorem~\ref{thm:therm_gradient}}
\label{sec:proofs}

In this section we present the proof for Thm.~\ref{thm:nml_fregret}, restated below:
\[\forall_{x}\ \max_{y^*} \Regretf(\pnml, \dataset, x, y^*) \le 2 \gmax \sqrt{ \Gamma(\dataset, x) / 2}\]

There are two lemmas we will use in our proof. First, the difference in expected value between two distributions $p(x)$ and $q(x)$ can be bounded by the total variation distance $TV(p, q)$ and the maximum function value $\fmax = \max_x f(x)$:
\begin{align*}
    | E_{p(x)}[f(x)] - E_{q(x)}[f(x)] | &= | \sum_x [p(x) - q(x)] f(x) |\\
    &\le \fmax |\sum_x [p(x) - q(x)] |\\ 
    &= \fmax 2 TV(p(x), q(x))
\end{align*}

Second, \citet{fogel2018universal} show that the NML distribution obtains the best possible minimax individual regret of 
\[
\max_y \Regret_{\textrm{ind}}(\pnml, \dataset, x, y) = \log\{\sum_y p(y | x, \thetahat_\dcupxy)\} \defeq \Gamma(\dataset, x)
\] 

Using these two facts, we can show:
\begin{align*}
\Regretf(\pnml, \dataset, x, y^*) &= | \E_{y \sim \pnml(y | x)}[g(y)] - \E_{y \sim p(y | x, \thetahat_\dcupxystar)}[g(y)] | \\ 
&\le  2 \gmax TV(p(y | x, \thetahat_\dcupxystar), \pnml(y|x)) \\ 
&\le  2 \gmax \sqrt{\frac{1}{2} KL(p(y|x, \thetahat_\dcupxystar), \pnml(y|x))} \\ 
&\le 2 \gmax \sqrt{\frac{1}{2} \max_q \E_{y \sim q(y|x)}[\log p(y|x, \thetahat_\dcupxy) - \log \pnml(y|x)] } \\
&= 2 \gmax \sqrt{\Gamma(\dataset, x) / 2}
\end{align*}

Where we apply the total variation distance lemma from lines 1 to 2. From lines 2 to 3, we used Pinsker's inequality to bound total variation with KL, and from lines 3 to 4 we used the fact that the maximum regret is always greater than the KL, i.e.
\begin{align*}
KL(p(y|x, \thetahat_\dcupxystar), \pnml(y|x)) &= \E_{y \sim p(y|x, \thetahat)}[\log p(y|x, \thetahat_\dcupxystar) - \log \pnml(y|x)] \\
&\le \E_{y \sim p(y|x, \thetahat)}[\log p(y|x, \thetahat_\dcupxy) - \log \pnml(y|x)] \\
&\le \max_q \E_{y \sim q(y|x)}[\log p(y|x, \thetahat_\dcupxy) - \log \pnml(y|x)] \\
\end{align*}
On the final step, we substituted the definition of $\Gamma(\dataset, x)$ as the individual regret of the NML distribution $\pnml$.

\newpage
\subsection{Experimental Details}
\label{sec:experiment_details}

\subsubsection{Detailed Pseudocode}
\label{sec:detailed_pseudocode}

There are a number of additional details we implemented in order to improve the performance of NEMO for high-dimensional tasks. These include:
\begin{itemize}
    \item Using the Adam optimizer~\citep{kingma2014adam} rather than stochastic gradient descent.
    \item Pretraining the models $\theta$ in order to initialize the procedure with an accurate initial model.
    \item Optimizing over a batch of $M$ designs, in order to follow previous evaluation protocols.
    \item Optimizing with $x$ with respect to the internal scores $\outint$ instead of $E_{\pnml}[g(y)]$.
    \item Using target networks for the NML model, originally proposed in reinforcement learning algorithms, to improve the stability of the method.
\end{itemize}

We present pseudocode for the practical implementation of NEMO below:

\begin{algorithm}[ht]
\label{alg:detailed_alg}
\caption{NEMO -- Practical Instantiation}
\begin{algorithmic}
\STATE \textbf{Input} Model class $\Theta$, Dataset $\dataset = (\vx_{1:N}, y_{1:N})$, number of bins $K$, batch size $M$, learning rates $\alpha_\theta$, $\alpha_x$, target update rate $\tau$
\STATE \textbf{Initialize} $K$ models $\theta^{1:K}_0$
\STATE \textbf{Initialize} batch of optimization iterates $\batch_0 = \vx^{1:M}_0$ from the \textbf{best} performing $\vx$ in $\dataset$.
\STATE Pretrain $\theta^{1:K}_0$ using supervised learning on $\dataset$.
\STATE \textbf{Initialize} target networks $\thetabar^{1:K}_0 \gets \theta^{1:K}_0$.
\STATE Quantize $y_{1:N}$ into $K$ bins, denoted as $\Yquant = \{\floor{y_1}, \cdots \floor{y_k}\}$.
\FOR {iteration $t$ in $1 \dots T$} 
    \FOR {$k$ in $1 \dots K$}
        \STATE Sample $\vx'_t$ from batch $\batch_t$.
        \STATE Construct Augmented dataset: $\dataset' \gets \dataset \cup (\vx'_t, \floor{y_k})$
        \STATE Compute gradient $g_\theta \gets \nabla_{\theta^y_t} \textrm{LogLikelihood}(\theta^y_t, \dataset')$
        \STATE Update model $\theta^{y}_t$ using Adam and $g_\theta$ with learning rate $\alpha_\theta$.
    \ENDFOR
    \STATE Update target networks $\thetabar^y_{t+1} \gets \tau \theta^y_{t+1} + (1-\tau)\thetabar^y_{t} $
    \FOR {$m$ in $1 \dots M$}
        \STATE Compute internal values $\outint^k(\vx^m_t)$ from target networks $\thetabar^y_t$ for all $k \in {1, \cdots, K}$
        \STATE Compute gradient $g_x$ for $\vx^m_t$:  $\nabla_x \frac{1}{K} \sum_k \outint^k(\vx^m_t)$
        \STATE Update $\vx^m_t$ using Adam and gradient $g_x$ with learning rate $\alpha_x$.
    \ENDFOR
\ENDFOR
\end{algorithmic}
\end{algorithm}

\subsubsection{Hyperparameters}
\label{sec:hyperparams}
The following table lists the hyperparameter settings we used for each task. We obtained our hyperparameter settings by performing a grid search across different settings of $\alpha_\theta$, $\alpha_x$, and $\tau$. We used 2-layer neural networks with softplus activations for all experiments. We used a smaller networks for GFP, Hopper, Ant, and DKitty (64-dimensional layers) and a lower discretization $K$ for computational performance reasons,  but we did not tune over these parameters.

\begin{table}[h]
\centering
\begin{tabular}{|c|c|c|c|c|c|c|}
    \hline
     & Superconductor & MoleculeActivity & GFP & Hopper & Ant & DKitty  \\ \hline
    Learning rate $\alpha_\theta$ & \multicolumn{2}{|c|}{0.05} & \multicolumn{4}{|c|}{0.005} \\ \hline
    Learning rate $\alpha_x$ & \multicolumn{2}{|c|}{0.1} & 0.01 & \multicolumn{3}{|c|}{0.001} \\ \hline
    Network Size  & \multicolumn{2}{|c|}{256,256} & \multicolumn{4}{|c|}{64,64} \\ \hline
    Discretization $K$ & \multicolumn{3}{|c|}{40} & \multicolumn{3}{|c|}{20} \\ \hline
    Batch size $M$  & \multicolumn{6}{|c|}{128} \\ \hline
    Target update rate $\tau$  & \multicolumn{6}{|c|}{0.05} \\ \hline
\end{tabular}
\end{table}

For baseline methods, please refer to~\citet{anonymous2020design} for hyperparameter settings. 

\newpage
\subsection{Ablation Studies}
\label{sec:learning_curves}

In this section, we present 3 ablation studies. The first is on the effect of NML training, by comparing NEMO to optimizing a pretrained baseline neural network. The second ablation study investigates the architecture choice, comparing the discretized logistic architecture to a standard feedforward neural network. The final ablation study investigates the ratio of model optimization steps to input optimization steps. Each logging iteration in these figures corresponds to 50 loop iterations as depicted in Algorithm~\ref{alg:detailed_alg}.

\subsubsection{Effect of NML}
In this section, we present learning curves for NML, as well as an ablation study comparing NML (orange curve) against a forward optimization algorithm without NML (blue curve), labeled ``No NML''. The ``No NML'' algorithm is identical to the NML algorithm detailed in Alg.~\ref{alg:detailed_alg}, except the NML learning rate $\alpha_\theta$ is set to 0.0. This means that the only model training that happens is done during the pretraining step. For illustrative purposes, we initialize the iterates from the worst-performing scores in the dataset to better visualize improvement, rather than initializing from the best scores which we used in our final reported numbers.

The scores on the Superconductor task are shown in the following figure. Removing NML training makes it very difficult for training on most designs, as shown by the poor performance on the $50^{th}$ percentile metric.

\includegraphics[width=0.99\linewidth]{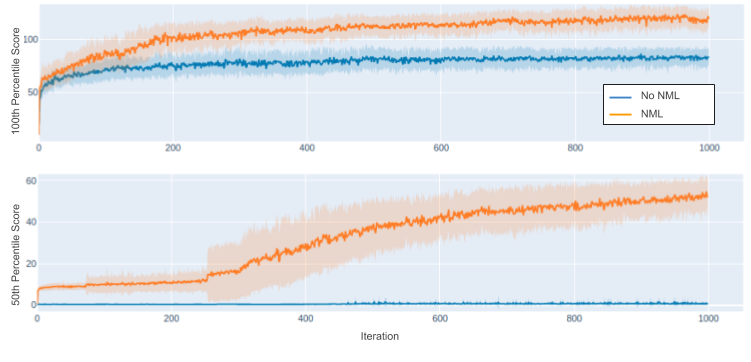}

The scores on the MoleculeActivity task follow a similar trend.

\includegraphics[width=0.99\linewidth]{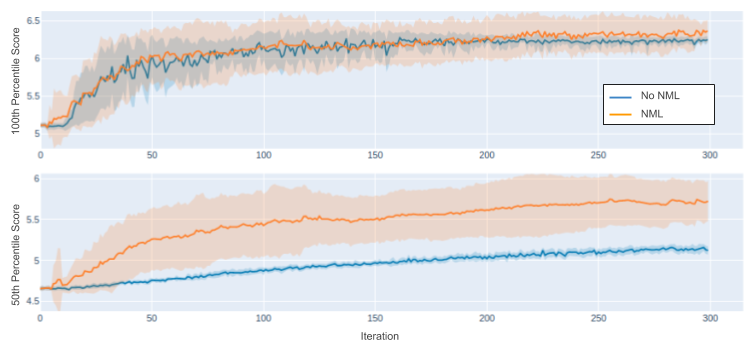}

And finally, the scores on the GFP task also display the same trend.

\includegraphics[width=0.99\linewidth]{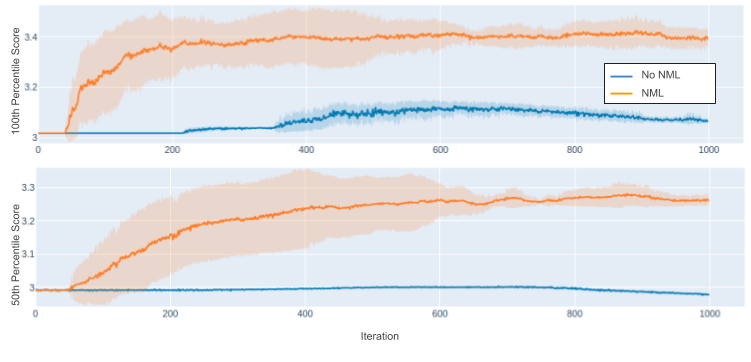}




\newpage
\subsubsection{Architecture Choice}

In this ablation study, we investigate the efficacy of the discretized logistic architecture. As a baseline, we compared against a standard feedforward network, trained with a softmax cross-entropy loss to predict the discretized output $y$. We label this network as "Categorical", because the output of the network is a categorical distribution. All other hyperparameters, including network layer sizes, remain unchanged from those reported in Appendix.~\ref{sec:hyperparams}.

On the Superconductor task, both the discretized logistic and categorical networks score well on the $100^{th}$ percentile metric, but the Categorical architecture displays less consistency in optimizing designs, as given by poor performance on the $50^{th}$ percentile metric.

\includegraphics[width=0.99\linewidth]{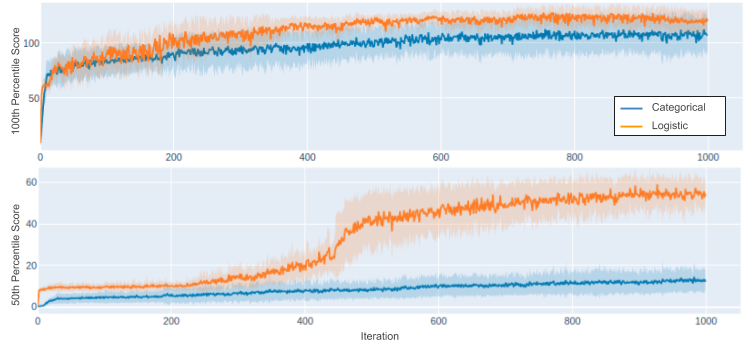}

On the MoleculeActivity task, the Categorical network performs comparatively better, but still underperforms the discretized logistic architecture.

\includegraphics[width=0.99\linewidth]{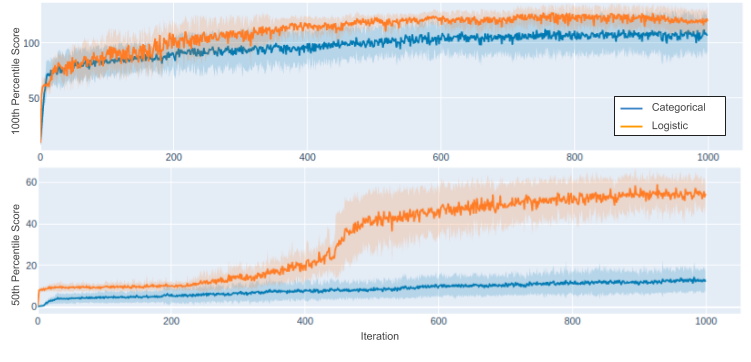}

\newpage
\subsubsection{Ratio of Optimization Steps}

In this ablation study, we investigate the effect of the ratio of model optimization steps to input optimization step. For this experiment, we fix the learning rate $\alpha_x$ to the hyperparameter values in Appendix.~\ref{sec:hyperparams}, fix the input optimization steps to 1, and vary the number of model optimization steps we take.

We first investigate the Superconductor task, using a small model learning rate $\alpha_\theta = 0.0005$. In this setting, we see a clear trend that additional model optimization steps are helpful and increase convergence speed. 

\includegraphics[width=0.99\linewidth]{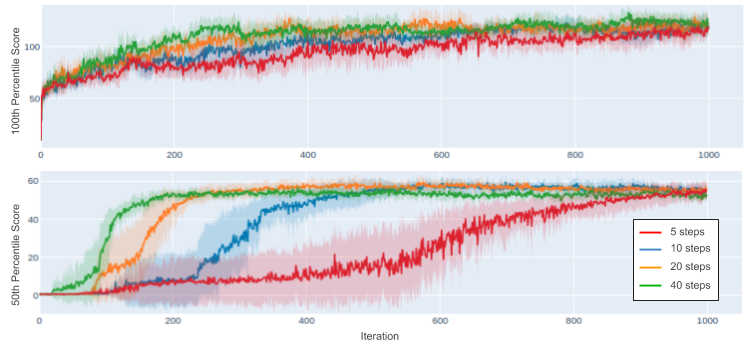}

Using a higher learning rate of $\alpha_\theta = 0.05$, a smaller amount of steps works better, which suggests that it is easy to destabilize the learning process using a higher learning rate.

\includegraphics[width=0.99\linewidth]{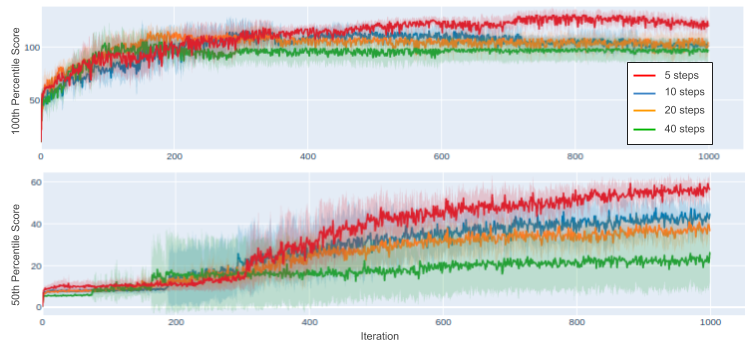}

\newpage
A similar trend holds true in the MoleculeActivity task, albeit less pronounced. The following figure uses a learning rate of $\alpha_\theta = 0.0005$, and we once again see that more model optimization steps leads to increased performance.

\includegraphics[width=0.99\linewidth]{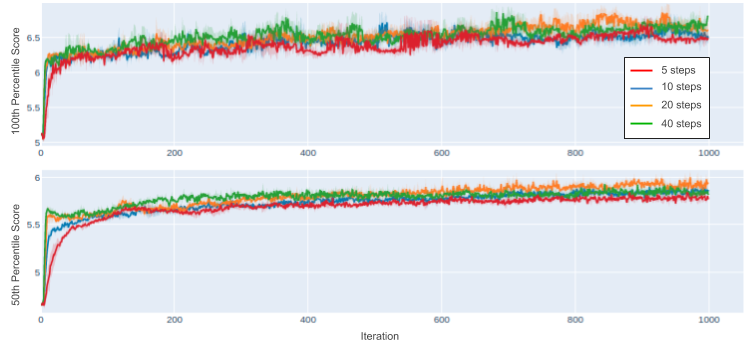}

And using a learning rate of $\alpha_\theta = 0.05$, the advantage becomes less clear.

\includegraphics[width=0.99\linewidth]{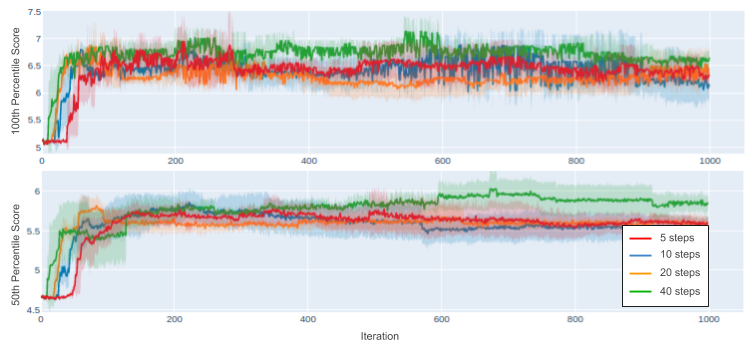}

Overall, while we performed a grid search over the learning rates to achieve the highest performance, using a large number of model optimization steps with a small learning rate $\alpha_\theta$ appears to be a consistent strategy which performs well.

\end{document}